\newcommand{\norm}[1]{\left\lVert#1\right\rVert}
\def\abs#1{\left\lvert#1\right\rvert}
\begin{document}
\title{A computational geometry approach for modeling neuronal fiber pathways\texorpdfstring{\thanks{Supported by NSF award: SSI \# 1664172 and NIH award \# 5R01NS103774-02.}}{}}
\titlerunning{A computational geometry approach for modeling neuronal fiber pathways}
%

%


\author{S. Shailja\orcidID{0000-0002-5056-9989}\and
Angela Zhang\orcidID{0000-0003-4198-8927} \and
B.S. Manjunath\orcidID{0000-0003-2804-3611}}
\index{{Shailja, S.} \and {Zhang, Angela} \and {Manjunath, B.S.}}

\authorrunning{S. Shailja et al.}
\institute{University of California, Santa Barbara CA 93117, USA\\
\email{\{shailja, angela00, manj\}@ucsb.edu}}


%
\maketitle              

\begin{abstract}
We propose a novel and efficient algorithm to model high-level topological structures of neuronal fibers. Tractography constructs complex neuronal fibers in three dimensions that exhibit the geometry of white matter pathways in the brain. However, most tractography analysis methods are time consuming and intractable. We develop a computational geometry-based tractography representation that aims to simplify the connectivity of white matter fibers. Given the trajectories of neuronal fiber pathways, we model the evolution of trajectories that encodes geometrically significant events and calculate their point correspondence in the 3D brain space. Trajectory inter-distance is used as a parameter to control the granularity of the model that allows local or global representation of the tractogram. Using diffusion MRI data from Alzheimer’s patient study, we extract tractography features from our model for distinguishing the Alzheimer's subject from the normal control. Software implementation of our algorithm is available on GitHub.
\keywords{Computational Geometry \and Computational Pathology \and Reeb Graph \and Trajectories \and Brain Fibers \and Connectome.}
\end{abstract}
\section{Introduction}
Diffusion MRI (dMRI) tractography~\cite{basser2000vivo} constructs morphological 3D neuronal fibers represented by 3D images called tractograms. In recent years, analysis of fibers in dMRI tractography data has received wide interest due to its potential applications in computational pathology, surgery, and studies of diseases, such as brain tumors~\cite{berman2004diffusion,kao2020corrigendum}, Alzheimer's~\cite{bozzali2002white}, and schizophrenia~\cite{park2004white}. 
Tractography datasets are huge and complex consisting of millions of fibers arising and terminating at different functional regions of the brain. Computational analysis of these fibers is challenging owing to their complex topological structures in three dimensions. Tractography produces white matter pathways that can be deduced as spatial trajectories represented by a sequence of 3D coordinates. To model the geometry of these trajectories, we utilize the concept of Reeb graphs~\cite{shinagawa1991surface} that have been successfully used in a wide variety of applications in computational geometry and graphics, such as shape matching, topological data analysis, simplification, and segmentation. We assume that the groups of trajectories that are spatially close to each other share similar properties. Therefore, we compute a model to encode the arising \& ending and the merging \& splitting behavior for groups of trajectories (as shown in Fig.~\ref{fig:ex1}) along with their point correspondence. With these computations in place, we develop a finite state machine that can be used to query the state of any trajectory or its shared groups. The resulting model has tunable granularity that can be used to derive models with the desired level of geometrical details or abstract properties. 
\section{Related Work}
Brain tractography datasets are constructed from the dMRI of an individual's brain~\cite{Yeh2011,Yeh2013}. One way to analyze the fiber tracts is to generate a connectivity matrix that provides a compact description of pairwise connectivity of regions of interest (ROI) derived from anatomical or computational brain atlases. For example, the connectivity matrices can be used to compute multiple graph theory-based metrics to distinguish between the brains of healthy children and those with recent traumatic brain injury~\cite{Watson2019}. However, such methods overlook the geometrical characteristics within a region of interest. A number of inter-fiber distance-based approaches have been used to analyze the fibers~\cite{andersson2008reporting,brun2004clustering,dodero2015automated,o2015statistical,moberts2005evaluation,zhang2008identifying} for clustering and segmentation but have some limitations. For example, one needs prior information about the number of clusters to be segmented in~\cite{dodero2015automated}. More sophisticated methods produce high-dimensional representations that are not efficient~\cite{wang2011fiber,yendiki2011automated}. Due to the complex nature of tractography algorithms, another way to compare bundles is by using tract profiling techniques that quantifies diffusion measures along each fiber tract~\cite{yeatman2012tract}. Notably, researchers in~\cite{cabeen2020tractography} introduced a representation that is sparse and can be integrated with learning methods for further study. However, their approach leads to possible loss of critical points of fibers (due to polynomial fitting) and ignores multi-fiber tractography. Our design addresses this by sequentially processing the group behavior emerging due to events of individual trajectories. Our method builds on previous work on time-dependent trajectory analysis using a Reeb graph. A deterministic algorithm for Reeb graph computation in $O(n\log n)$ time is shown in~\cite{parsa2013deterministic}. Reeb graph can be used to model the trajectory grouping structure defined by time as a parameter~\cite{buchin2013trajectory}. For tractography analysis, the concept of ``bundling" and ``unbundling" structure of trajectory data to compute a sparse graph is proposed in~\cite{sun2015reeb}. They show graph representation of brain tractography but do not present the algorithm or proofs for the computation, focusing instead on the novel problem definition. 
\section{Preliminaries}\label{sec:def}
In the three dimensional Euclidean space $\mathbb{R}^3$, we define the following terms that would help in setting up the problem in this section.


\textbf{\textit{Trajectory}:} A trajectory $T$ is as an ordered sequence of points in $\mathbb{R}^3$. We denote a trajectory $T$ as a sequence of points $\{p_1, p_2, ... , p_m\}$ , where $m$ is the number of points in $T$ and $p_i \in \mathbb{R}^3$.
\begin{figure}[t!]
\scriptsize
            \centering
            \centerline{\includegraphics[scale = 0.25]{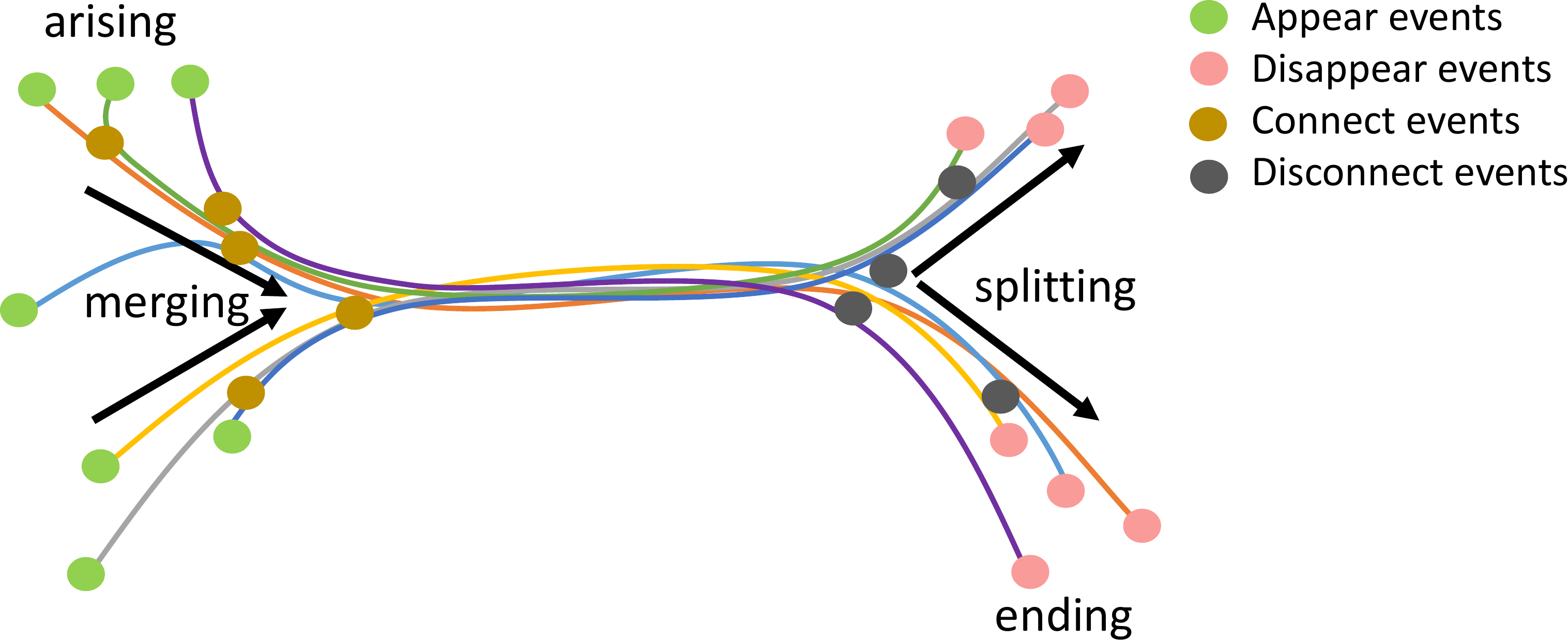}}
        \caption{A basic example of a set of trajectories displaying the arising, merging, splitting, and ending behaviour. These qualitative behaviors of group of trajectories emerge due to the events (appear, connect, disconnect, and disappear) of individual trajectory. Events of trajectories are used to define behavior of the group of trajectories.}
        \label{fig:ex1}

    \end{figure}

\textbf{\textit{$\epsilon$-(dis)connected points}}: For any pair of points $p_1$ and $p_2$ in $\mathbb{R}^3$, we define $d(p_1, p_2)$ as the Euclidean distance between the two points:
\begin{equation*}
    d(p_1, p_2) =  \norm{p_1 - p_2}_2,
\end{equation*}
where $\norm{\cdot}_2$ represents the Euclidean norm.
Two points $p_1, p_2$ are $\epsilon$-connected if $d(p_1, p_2) \leq \epsilon$. Similarly, two points $p_1, p_2$ are $\epsilon$-disconnected if $d(p_1, p_2) > \epsilon$.

\textbf{\textit{Appear event:}} For each trajectory $T$, the initial point of its ordered sequence is labeled for the occurrence of the \textit{appear event}. For example, for trajectory $T_1 = \{p_1, p_2, ... , p_m\}$, we observe the appear event at the point $p_1$. 

\textbf{\textit{Disappear event:}} For each trajectory $T$, the final point of its ordered sequence is labeled for the occurrence of the \textit{disappear event}. For example, for trajectory $T_1 = \{p_1, p_2, ... , p_{m}\}$ , we observe the disappear event at $p_{m}$. 

\textbf{\textit{Connect events:}} To define \textit{connect events} for a pair of trajectories, consider two trajectories
\begin{align*}
    T = \{p_1, p_2, ..., p_m\}, \quad
    T^{'} = \{p^{'}_1, p^{'}_2, ..., p^{'}_m\},
\end{align*}
then a connect event for the pair $(T, T^{'})$ is defined by $(p_i, p_j^{'})$ such that $p_i \in T, p_j^{'} \in T^{'}$ and,
\begin{align*}
    d(p_i, p_j^{'}) \leq \epsilon, \quad
    d(p_{i-1}, p_{j-1}^{'}) > \epsilon, \quad 
    \text{ for } i > 1 \text{ and } j >1.
\end{align*}
If there is no such pair of points, it implies that $T$ and $T^{'}$ are disjoint. Moreover, if $T$ and $T^{'}$ are $\epsilon$-connected at $(p_i, p_j^{'})$ and if $T^{'}$ and $T^{*}$ are also $\epsilon$-connected at $(p_j^{'}, p_l^{*})$ where $p_l^{*} \in T^{*}$, then we say that $T$ and $T^*$ are \textit{$\epsilon$-step connected} at $(p_i, p_l^{*})$.

\textbf{\textit{Disconnect events:}} Given a pair of trajectories ($T$, $T^{'}$) with a connect event at $(p_i, p_j^{'})$, we define a \textit{disconnect event} by $(p_{i+k}, p^{'}_{j+k})$ such that,
\begin{align*}
    d(p_{i+k}, p_{j+k}^{'}) > \epsilon, \quad
    d(p_{i+k-1}, p_{j+k-1}^{'}) \leq \epsilon.
\end{align*}

\textbf{\textit{Max-width $\epsilon$-connected trajectories:}} For an input $\mathcal{I}$, there are many possible $\epsilon$-step connected trajectories. 
The maximal group of trajectories at a given step $k$ are called \textit{max-width $\epsilon$-step connected} and there is no other possible set of sub-trajectories that can intersect with the maximal group at $k$.\\

Note that the trajectories estimated from dMRI tractography do not have a specific beginning or ending, as dMRI is not sensitive to the direction of connections. So, reversing the order of the points of a streamlines will produce similar results. Two events appear and disappear are used for convenience in describing the algorithm and its implementation.
\begin{figure}[t]
\scriptsize
            \centering
            \centerline{\includegraphics[scale = 0.23]{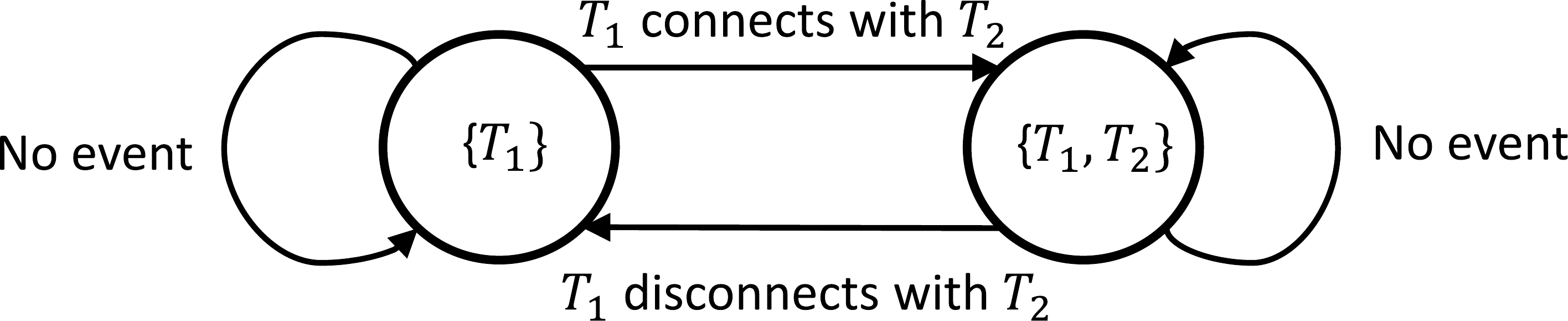}}
        \caption{An example of state diagram for trajectory $T_1$. $T_1$ is either directly (dis)connected with $T_2$ or $\epsilon$-step (dis)connected.}
        \label{fig:fsm}

    \end{figure}

\subsection{Problem Formulation}
We set up the following central problem for this paper:

\textit{Input:} 
A set of trajectories $\mathcal{I} = \{T_1, T_2, ... , T_n\}$, such that $T_i \in \mathbb{R}^3$ for all $i \in \{1,2,...,n\}$ where $n$ is the number of trajectories.


\textit{Output:} A finite state machine (FSM) $S$ that models the evolution of trajectories and their critical points of interaction with all other trajectories. 
\begin{align*}
    S &= (A, O, V, \mathcal{R}),\\
    \mathcal{R}&: (S \times A) \longrightarrow (S \times O),
\end{align*}
where $A$ is the set of events associated with each trajectory $T_{i} \in \mathcal{I}$, $O$ is the set of outputs encoding the location information of the critical events, $V$ is the set of states that corresponds to a group of trajectories. $\mathcal{R}$ is the state-transition and output function. When the machine is in a current state $v \in V$ and receives an input $a \in A$ it moves to the next state specified by $R$ and produces an output location $o \in O$ as shown in Fig.~\ref{fig:fsm}.




\section{Reeb Graph}
The central part of solving the problem as stated above is to compute $\mathcal{R}$ --- the state transition and output function. Towards that end, we compute an undirected graph $\mathcal{R}$ called the Reeb graph. In this section, we define the Reeb graph and then proceed to develop an algorithm that can compute this graph for a set of trajectories. Formally, a Reeb graph $\mathcal{R}$ is defined on a manifold $\mathcal{M} \in \mathbb{R}^3$ using the evolution of level sets $L$~\cite{doraiswamy2009efficient}. To adapt this definition of $\mathcal{R}$ for the case of neuronal fiber trajectory evolution problem, we define a manifold $\mathcal{M}$ in $\mathbb{R}^3$ as the union of all points in the tractogram. The set of points of trajectories at step $k$ is the level set of k. The connected components in the level set of $k$ correspond to the max-width $\epsilon$-connected trajectories at step $k$. Unlike previous studies~\cite{buchin2013trajectory}, here, any number of trajectories can become $\epsilon$-(dis)connected at the same location. Reeb graph $\mathcal{R}$ describes the evolution of the connected components over sequential steps. At every step $k$, the changes in connected components (states of FSM) are represented by vertices in $\mathcal{R}$.
\subsection{Computing the Reeb Graph}
\begin{figure}[t]
\scriptsize
            \centering
            \centerline{\includegraphics[scale = 0.25]{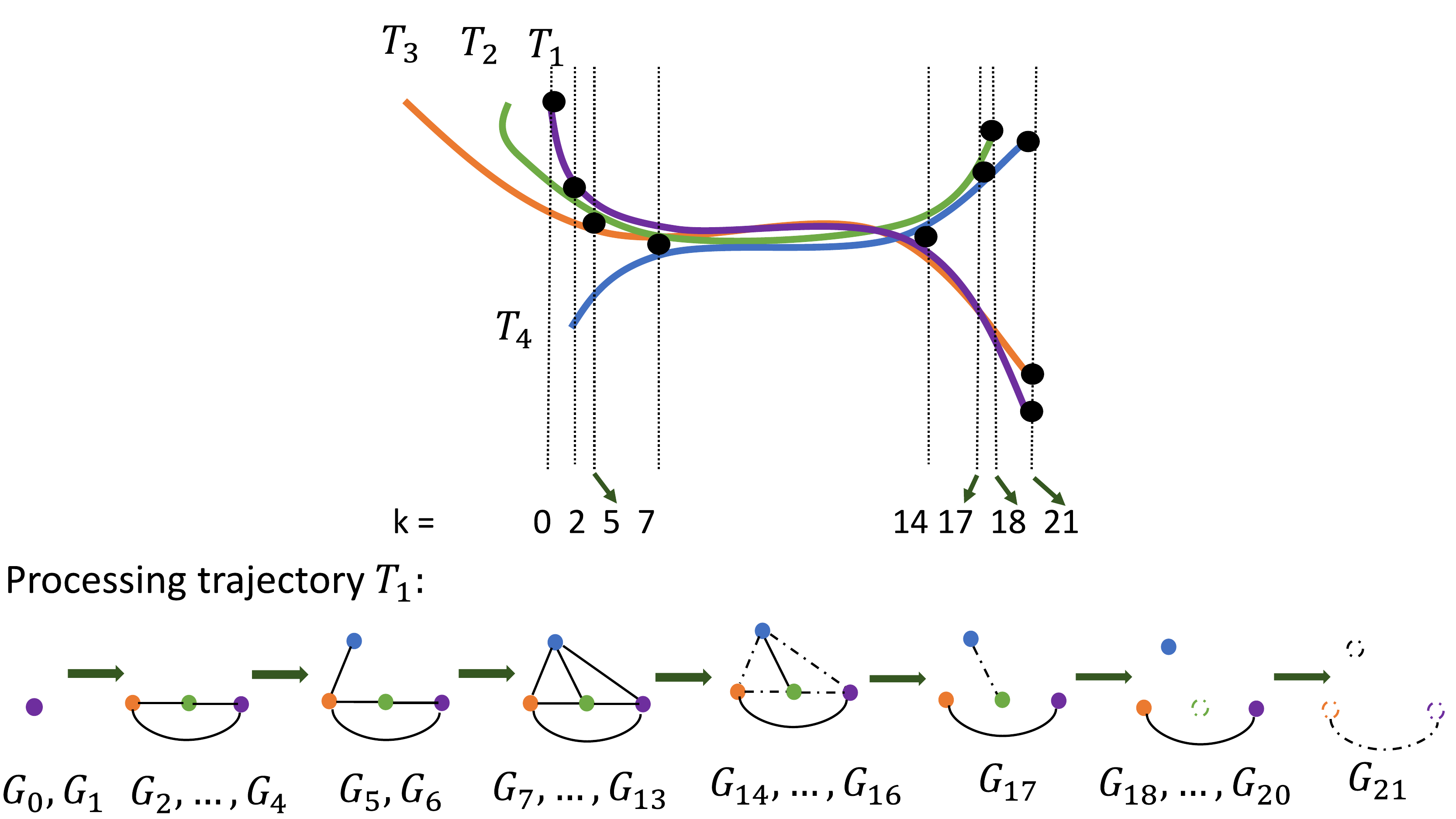}}
        \caption{The figure shows an input $\mathcal{I} = \{T_1, T_2, T_3, T_4\}$ exhibiting arising, merging, splitting, and ending behaviour. After processing the sequence of points in $T_1$, at $k = 0, 2, 5, 7, 14, 17, 18, 21$ steps, we modify the $G_k$ respectively. Connect, disconnect, appear, and disappear events associated with $T_1$ are marked by black circles. Delete node and edge queries are represented by dashed circle and dashed line in $G_k$.}
        \label{fig:ex_G}
    \end{figure}
In Section~\ref{sec:def}, for a given trajectory, we defined appear and disappear events. For a pair of trajectories, we defined connect and disconnect events. These events ($a \in A$) describe the branching structure of the trajectories. To compute the Reeb graph, we process these events sequentially. We maintain a graph $G = (V^{'}, E^{'})$ where the vertices represent the set of trajectories. $G$ is a graph that changes with steps representing the connect and disconnect relations between different trajectories. At each step $k$, we insert new nodes at appear events and delete nodes at disappear events. At connect events, we insert edges in $G$ and at disconnect events, we delete edges. At each step $k$, an edge $(T_1, T_2$) in $G$ shows that $T_1$ and $T_2$ are directly connected. Therefore the max-width $\epsilon$-connected trajectories correspond to the connected components in $G$ at step $k$.

\textbf{Initialization:} We spend $O(N^{2})$ time to store the appear, disappear, connect, and disconnect events for all pair of the trajectories. We store a mapping $M$ from the current components in $G$ to the corresponding vertices in $\mathcal{R}$. We start from one of the trajectories and add other trajectories of interest on the way of following its points sequentially. We maintain a data structure to flag the points for which the events are already processed and store their mappings to the vertices of the Reeb graph in $D$. Note that although the computational time is $O(N^{2})$, this step is massively parallelizable.  
\begin{figure}[t]
\scriptsize
            \centering
            \centerline{\includegraphics[scale = 0.23]{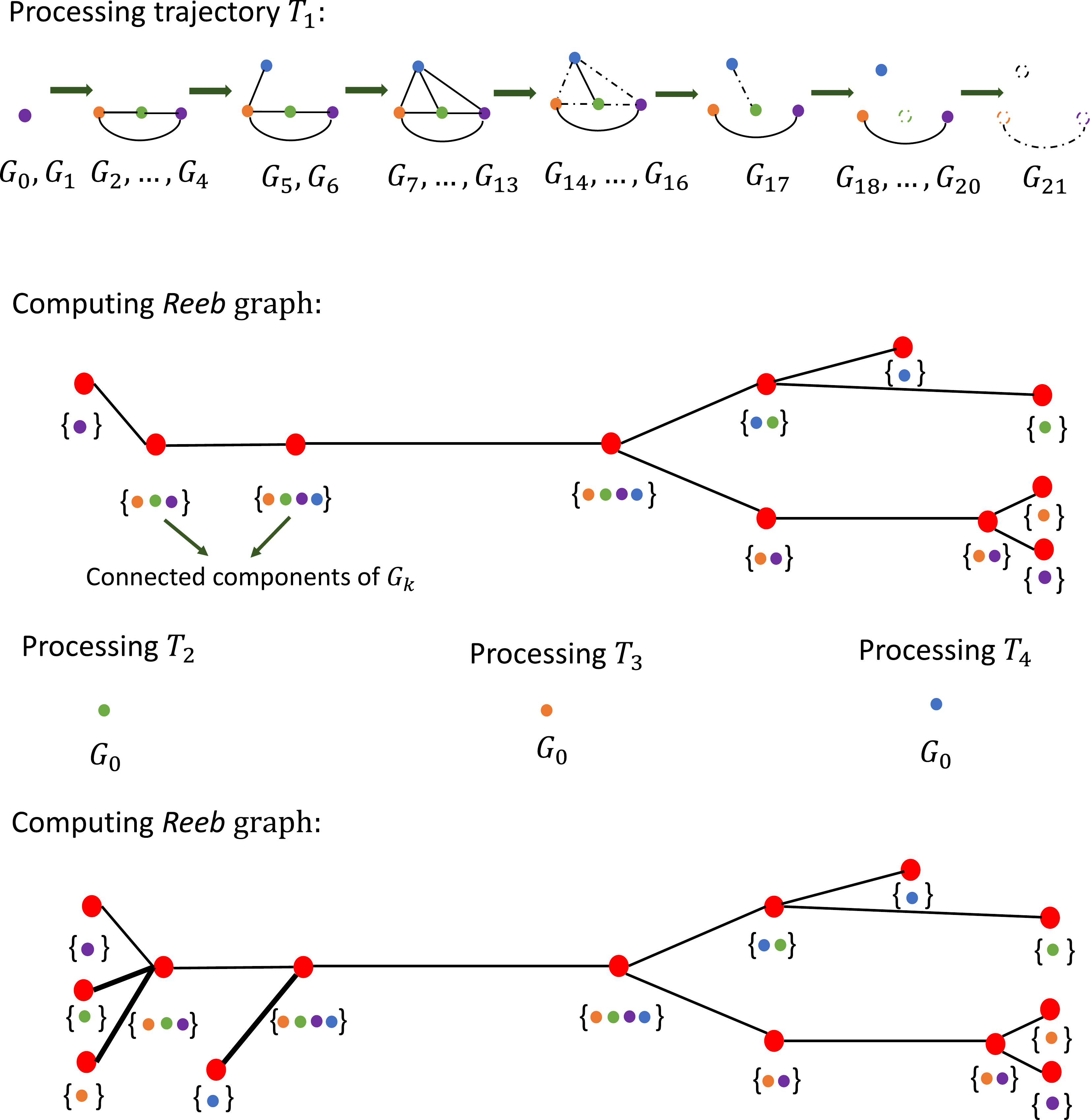}}
        \caption{Continuing the same example from Fig.~\ref{fig:ex_G}, we show representation of $\mathcal{R}$ from $G_k$. Edges of $\mathcal{R}$ encodes the maximal group of trajectories and vertices of $\mathcal{R}$ records the significant points of the trajectories.}
        \label{fig:ex_Gall}

    \end{figure}

\textbf{Split and Merge:} To handle a disconnect event of trajectories $T_1$ and $T_2$ at step $k$, we delete the edge $(T_1, T_2)$ from $G_{k}$. Similarly, for a connect event of trajectories $T_1$ and $T_2$ at step $k$, we add the edge $(T_1, T_2)$ to $G_{k}$. We do this for all the connect and disconnect events as shown in Fig.~\ref{fig:ex_G} for trajectory $T_1$. For the disconnect event, we query $G_{k-1}$ to get the connected component $C$ consisting of trajectories $T_1$ and $T_2$ and locate the corresponding $u$ in $\mathcal{R}$. We query $G_{k}$ to get the connected components $C_1$ and $C_2$ consisting of trajectories $T_1$ and $T_2$, respectively. $C_{1} = C_{2}$ implies that the trajectories $T_1$ and $T_2$ are still $\epsilon$-step connected. If  $C_{1} \neq C_{2}$, we add a new split vertex $v$ to $\mathcal{R}$ and a new edge $(u, v)$ and update $M$ accordingly.

\textbf{{Computing $\mathcal{R}$ from $G$:}} We query $G_{k}$ and $G_{k-1}$ to get the connected components at step $k$ and $k-1$ respectively. For each connected component $C_{c}$ in $G_{k}$, if $C_{c}$ is present in the connected components of $G_{k-1}$, then we do not modify $\mathcal{R}$. This implies that no such event occurred in the trajectories of $C_{c}$ which could result in any critical points. Otherwise, using $M$, we locate the corresponding nodes in $\mathcal{R}$ for the connected components in $G_{k-1}$, we call it previous connected components. The corresponding nodes in $\mathcal{R}$ for the connected components in $G_{k}$ are called present connected components. For each component in present connected components, we add a node $v$ in $\mathcal{R}$ if not already present in the previous connected component and assign the location ($o \in O$) as the coordinates of one of the points in the connected components. If that is the case, we also add an edge $(u, v)$, where $u$ is the node corresponding to previous connected component $C_1$ and $v$ is the node corresponding to present connected component $C_{2}$, if $|C_1 \cap C_2| > 0$. Finally, we update $M$ accordingly.

At next step $k + 1$, if we encounter the point of a trajectory $T$ for which the events have already been processed, we query $D$ to locate the vertex $u$ and $v$ in $\mathcal{R}$ for $p_{k}$ and $p_{k+1}$ respectively. We add an edge $(u, v)$ to $\mathcal{R}$ as shown in Fig.~\ref{fig:ex_Gall} and delete the node corresponding to trajectory $T$ in $G$ and update $M$.
\begin{figure}[t]
\scriptsize
            \centering
            \centerline{\includegraphics[scale = 0.17]{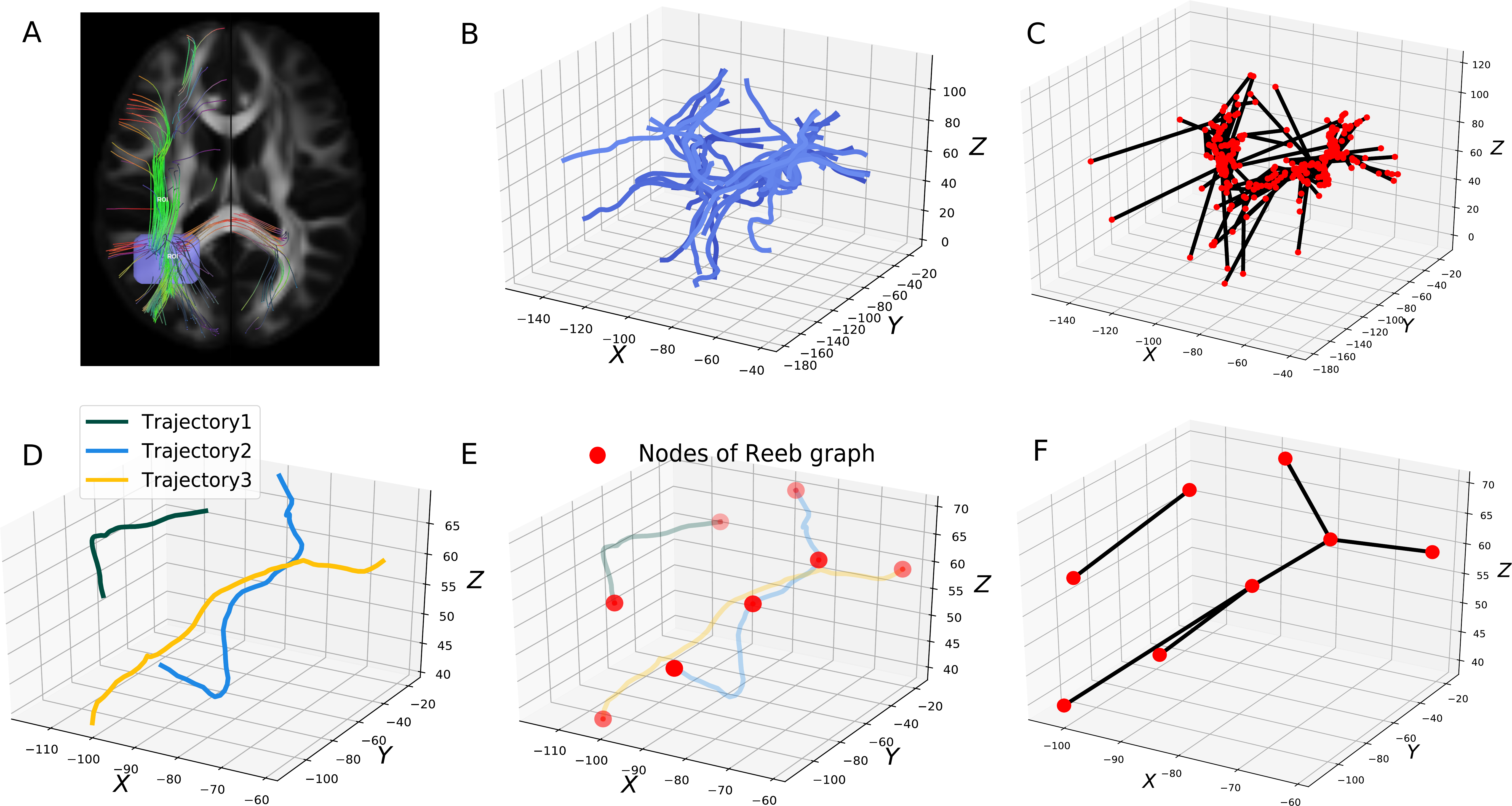}}
        \caption{(A) shows the fiber tracts created by DSI Studio (\protect\url{http://dsi-studio.labsolver.org/Manual/Fiber-Tracking}) for an example ROI. (B) shows the example white matter fiber tracts in 3D. (C) exhibits the corresponding $\mathcal{R}$ for the example fibers. (D) shows three fibers from B to form a qualitative impression of our proposed algorithm. (E) indicates the nodes of $\mathcal{R}$ overlapped on the trajectories. (F) represents the proposed grouping structure with the vertices and edges.}
        \label{fig:real}

    \end{figure}
\begin{theorem}
For a given set of trajectories, $\mathcal{I} = \{T_1, T_2, ..., T_n\}$ with a total of $N$ points, the Reeb graph $\mathcal{R}$ of $\mathcal{I}$ can be computed in $O(N \log N)$ time.
\end{theorem}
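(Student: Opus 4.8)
The plan is to bound the total running time as the sum of three contributions — ordering the events along the step parameter, performing the graph updates they trigger, and the associated bookkeeping on $M$ and $D$ — and to show each contribution is $O(N \log N)$.

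First I would count the operations. As we sweep the trajectories in increasing step order, each of the $N$ points is visited exactly once, and the appear and disappear events number $2n = O(N)$ in total. For the connect and disconnect events the key observation is that only those that change the connected-component structure of $G$ produce a vertex or edge of $\mathcal{R}$: a connect that joins two distinct components (a merge) or a disconnect that separates one component (a split). Since every merge decreases and every split increases the number of components by one, while appears and disappears absorb the remaining unit changes, the number of \emph{effective} events, and hence $|V(\mathcal{R})|$ and $|E(\mathcal{R})|$, is governed by these balanced changes; under the standing assumption that each point is $\epsilon$-close to only $O(1)$ other trajectories this yields $O(N)$ effective events, so each point triggers amortized $O(1)$ modifications to $G$ and $\mathcal{R}$.

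Second I would account for the cost per operation. Sorting the events by their step $k$ — equivalently, sweeping the already-ordered points and consulting the precomputed tables $D$ and $M$ — costs $O(N \log N)$ and is the source of the logarithmic factor. Each individual modification (insert or delete a node, insert or delete an edge, and locate the component containing a given trajectory) must then run in $O(\log N)$: maintaining $M$ and $D$ as balanced search trees or hash maps makes the lookups $O(\log N)$, so the substantive cost is the connectivity query on $G$.

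The main obstacle is exactly this connectivity maintenance across the disconnect (split) events. Connect events only merge components, which a union–find structure handles in near-constant amortized time, but disconnect events delete edges and may split a component, and union–find supports no deletions. I would resolve this by exploiting that the full event sequence is fixed after the $O(N^2)$ (parallelizable) preprocessing, making the problem \emph{offline}: I would run the divide-and-conquer dynamic-connectivity scheme over the timeline of steps, in which each edge is alive on an interval and is inserted, with union-by-rank and rollback, into the $O(\log N)$ segment-tree nodes covering that interval, giving $O(\log N)$ amortized time per insertion, deletion, and component query. Equivalently, since the construction is precisely the Reeb graph of the step function on the complex $\mathcal{M}$ (whose level set at $k$ is the set of step-$k$ points and whose components are the max-width $\epsilon$-connected groups), I would invoke the deterministic $O(n\log n)$ level-set tracking of~\cite{parsa2013deterministic} with $n = O(N)$ cells, whose split-handling is built for this situation. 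Combining the $O(N\log N)$ ordering, the $O(N)$ effective events each processed in $O(\log N)$, and the $O(\log N)$ maintenance of $M$ and $D$, yields the claimed $O(N \log N)$ bound, with event detection charged separately since it is parallelizable; I would flag that without the bounded-local-density assumption the number of candidate events can reach $O(N^2)$, and this assumption is the genuine mathematical hinge of the statement.
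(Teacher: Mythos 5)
Your proposal is correct (modulo the caveat you yourself flag) and follows essentially the same route as the paper's proof: process the precomputed events with an offline dynamic graph connectivity structure (the paper cites~\cite{parsa2013deterministic} for this) at $O(\log N)$ per insert, delete, and connectivity query, with the $O(N^2)$ event detection charged separately as parallelizable preprocessing, giving $O(N)$ operations times $O(\log N)$ each. The difference is only one of rigor: the paper's proof simply asserts that ``in the worst case, we modify and query the graph $G$ \ldots for all the points,'' thereby implicitly assuming the $O(N)$ bound on connect/disconnect events that you correctly identify as the real hinge (without a bounded-local-density assumption the number of raw edge insertions/deletions in $G$ can reach $\Theta(nN)$), and it buries the union--find-cannot-handle-deletions issue, which you resolve explicitly via the offline segment-tree/rollback scheme, inside the citation.
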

\begin{proof}
It is possible to compute the connected components of a graph $G(V^{'}, E^{'})$ with $\abs{V^{'}}$ vertices and $\abs{E^{'}}$ edges using Breadth First Search (BFS) or Depth First Search (DFS) in $O(N^2)$ time. But, since we know all of the points of the given input $\mathcal{I}$ at which any event occurs, we can use a dynamic graph connectivity approach~\cite{parsa2013deterministic} to improve the computation time. This method allows connectivity operations, inserts, and deletes, in $O(log N)$ time. In the worst case, we modify and query the graph $G$ to get the connected components for all the points in $\mathcal{I}$. Hence, the total time required for the construction of $\mathcal{R}$ is $O(NlogN)$.
\end{proof}

\section{Examples and Applications}
\begin{figure}[t]
\scriptsize
            \centering
            \centerline{\includegraphics[scale = 0.26]{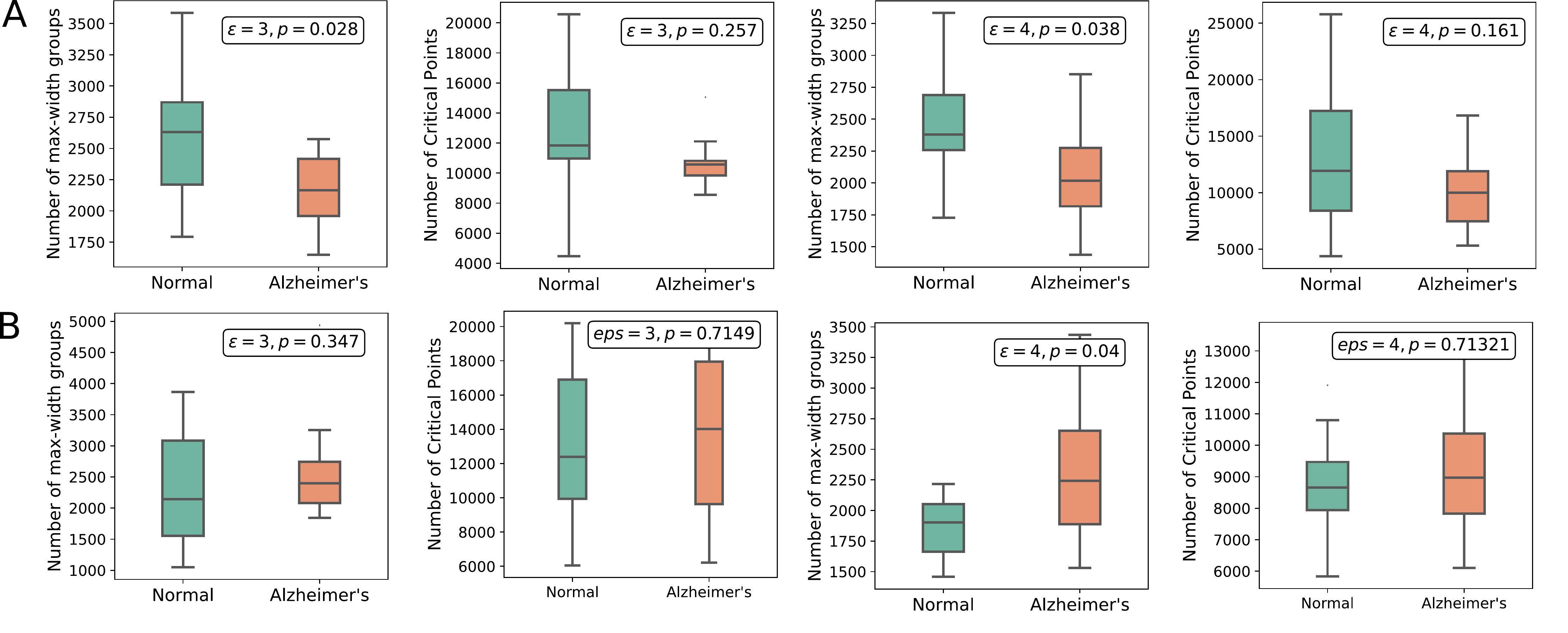}}
        \caption{Statistical analysis results for different values of $\epsilon$ showing the comparison between normal and Alzheimer's subjects across properties of $\mathcal{R}$ for ROI A) Posterior Cingulate Gyrus and B) Middle Occipital Gyrus }
        \label{fig:box}

    \end{figure}
To the best of our knowledge, there are no existing modeling methods in the literature for brain fibers that can be used to compare our method directly.
To provide the proof of concept and demonstrate utility, we evaluate our proposed algorithm on real data and validate manually as illustrated in Fig.~\ref{fig:real}. To design a case study demonstrating the utility of $\mathcal{R}$, we randomly select 22 subjects (11 Normal and 11 Alzheimer's patient) from the publicly available Alzheimer's Disease Neuroimaging Initiative (ADNI)~\cite{Petersen2009} dataset (\url{http://adni.loni.usc.edu/}). We evaluate the qualitative representation of critical points using our model on fibers for random ROIs. All the analytically significant points are captured by $\mathcal{R}$ through nodes and edges, which are highly consistent across subjects.
The proposed model can be employed in the existing deep learning and machine learning algorithms to provide new insights into the structure and the function of the brain. Similar to recent research works where graph theory-based features are utilized for classification tasks, we compute the total number of max-width $\epsilon$-connected groups that is $|E|$ and the aggregate of significant points on fibers that is $|V|$. We also calculate network properties such as clustering, centrality, modularity, and efficiency of $\mathcal{R}$. We choose two ROIs: Posterior Cingulate Gyrus and Middle Occipital Gyrus from the left hemisphere based on the Automated Anatomical Labelling (AAL) atlas~\cite{Rondina2018,TzourioMazoyer2002} and compute tractography consisting of 1000 fibers in each ROI for each subject. We used Q-Space Diffeomorphic Reconstruction as implemented in DSI Studio \cite{Yeh2011} to compute the fibers. In Fig.~\ref{fig:box}, we show the distribution of a set of properties that can be used to facilitate comparisons between Alzheimer's and normal subjects. By comparing the p-value for the ROIs shown in Fig.~\ref{fig:box}, we can conclude that Posterior Cingulate Gyrus (lesser p-value) is a more significant ROI than Middle Occipital Gyrus. This is in accordance with the study~\cite{Rondina2018} that highlights the relevant ROIs for Alzheimer's disease. The average run time of our implementation for examples consisting of 132,000 points on average was 42 seconds on Intel Core CPU 4GHz processor with 32 GB RAM. 
\section{Conclusion}
Our paper proposes the study of the spatial evolution of neuronal trajectories including the algorithmic analysis. We also demonstrate how our proposed reduced graph encodes the critical points of the pathways. Point correspondence of the critical coordinates in the 3D brain space calculated in our algorithm is an essential requirement of the tract-orientated quantitative analysis which is overlooked in the previous works. This aids in localizing and underpinning the points of interest in white matter tracts. Through our preliminary experiments, we show a set of properties of the Reeb graph that can be used to distinguish between the Alzheimer's patients and control subjects. In future, we plan to utilize graph-theoretic concepts to analyze the Reeb graph models of white matter fibers. We intend to further evaluate the reproducibility of our approach on additional datasets. Integrating graph-theoretic features of the Reeb graph with data-driven learning approaches can greatly improve our understanding of various human disease pathways.
\bibliographystyle{splncs04}
\bibliography{mybibliography}
%




\end{document}